\documentclass[11pt]{article}

\usepackage{amsmath,amssymb,amsthm}
\usepackage{graphicx}
\usepackage{booktabs}
\usepackage{caption}
\usepackage{subcaption}
\usepackage{geometry}
\usepackage{natbib}
\usepackage[hidelinks]{hyperref}

\newtheorem{definition}{Definition}
\newtheorem{theorem}{Theorem}
\newtheorem{remark}{Remark}

\title{Can Machines Really See Objects in Images? \\
\vspace{10pt}
\small{A Study Based on Syntactic Distance and Visual Self-Referential Instances}}

\author{
  Xingyu Peng\textsuperscript{1,}\footnote{Equal contribution.}\,\,,
  Junran Wu\textsuperscript{1,*},
  Yue Hou\textsuperscript{1},
  Zhongliang Qiao\textsuperscript{2},
  Jiaheng Liu\textsuperscript{3}, 
  Shangzhe Li\textsuperscript{4}, \\
  Jichang Zhao\textsuperscript{1},
  Wenjun Wu\textsuperscript{1},
  Xianglong Liu\textsuperscript{1},
  Yongxin Tong\textsuperscript{1},
  Li Dong\textsuperscript{1},
  Ke Xu\textsuperscript{1,}\footnote{Correspondence to: Ke Xu \textless kexu@buaa.edu.cn\textgreater .}\\[10pt]
  \textsuperscript{1}SKLCCSE, Beihang University \\
  \textsuperscript{2}Xiaoyu Brain \\
  \textsuperscript{3}Nanjing University \\ 
  \textsuperscript{4}Central University of Finance and Economics \\[6pt]
}

\date{}

\begin{document}
\maketitle

\begin{abstract}
\noindent Can a vision model truly see an object, or does it only fit surface-level visual cues? 
Following Wittgenstein's view that the limits of language are the limits of the world, we view a model's recognition ability as bounded by the descriptive system it has learned. 
In current vision models, this system is often realized through learned feature representations that exploit local statistical cues. We therefore ask whether a model can still classify correctly when such local cues provide no stable basis for distinction.
We formalize this question with syntactic distance, which measures class separability through the symmetry of the operations mapping one class to the other: positive distance exposes exploitable local features, whereas zero distance requires global semantics rather than local rules. 
We construct a visual self-referential task in maximum-variance binary noise: positive samples contain a closed square, while negative samples contain an otherwise identical square with one flipped boundary pixel. The two classes differ in global semantics but have zero syntactic distance, making local statistical shortcuts unreliable.
Experiments on ResNets and Vision Transformers reveal a consistent phase-transition phenomenon, with accuracy collapsing to random guessing once the image scale crosses a critical point and does not recover within the tested range. 
Larger training sets and models only delay this collapse, while globally attentive ViTs reach it earlier. 
These results reveal a structural capability boundary of current architectures on global-concept tasks, suggesting that general intelligence may require creating new language, not reusing an existing one.
\end{abstract}

\section{Introduction}
\label{sec:intro}

When a vision model classifies an object in an image as a cat, a basic question arises. Does the model truly grasp the concept of a cat, or does it merely match the target by learning local pixel patterns such as fur texture, color, and ear shape~\citep{geirhos2020shortcut,geirhos2018imagenet}? Although this question appears philosophical, it directly concerns the capability boundary of deep learning models, the nature of machine visual cognition, and a basic problem that computer vision still needs to clarify.

The observation of Wittgenstein in the Tractatus, that the limits of language are the limits of the world, provides a useful philosophical view for this question~\citep{wittgenstein1921logisch}. From this view, whether an object can be seen at the visual level is essentially equivalent to whether it can be effectively described at the linguistic level. Seeing is not only a passive sensory process. It is an active construction in which raw visual input is placed into a logical and cognitive structure that can be thought and stated. An object that cannot be described by language cannot become a visual object with cognitive meaning. Conversely, an object that cannot be captured by vision also lies outside the expressive boundary of language. Vision and language therefore share a cognitive boundary, and together they define the range of the world that can be known and understood.

Classification is largely a linguistic act. To place an object into a class is to describe it with a concept in language. Thus, seeing, describing, and classifying are tightly linked. In the forward direction, a human sees an object, describes it in language, and then classifies it according to that description. In the reverse direction, stable and correct classification of an object can serve as evidence that the model has an adequate descriptive language for that object, and therefore sees it in a meaningful sense. Vision is also a major force that expands cognitive boundaries. Humans continuously see things that have not been seen before, which creates the need to name and define new concepts. Language then expands, and the cognitive framework grows with it.

In deep learning, the core mechanism of widely used image recognition models is essentially a set of machine-specific language assigned by humans. A model learns feature rules and classification standards defined by humans, and uses them to interpret and classify visual inputs. Following Wittgenstein's view, if this machine language cannot effectively describe an object in an image, then the object has no existence within the machine's cognitive framework. In this sense, the machine does not truly see the object, and it cannot be said to understand it in an essential way.

\subsection{An Intuitive Analogy}
\label{sec:analogy}
An image recognition task can be viewed as a mapping from the syntactic features of an image to its semantic content. A sealed-box analogy makes this distinction concrete. The box corresponds to the surface syntax of the image, while the object inside the box corresponds to the deeper semantics. The task is to determine the state of the object inside. Suppose the box contains water, and one needs to decide whether it is liquid or solid. Opening the box is not necessary, since touching the surface of the box and measuring its temperature can be enough. This is because liquid water and ice have many physical differences that can be measured externally. Therefore, even if one correctly distinguishes the state, this does not prove that one has opened the box and seen the inside.

Now suppose the box contains a coin, and the task is to decide whether it faces heads or tails. External measurements such as temperature or weight cannot solve this task, because the two sides of the coin are symmetric with respect to all measurable external features. There is no stable external cue for the distinction. In this case, one must open the box and directly observe the coin to make the correct judgment. Thus, if an agent can correctly determine whether the coin is heads or tails, it must have truly seen the object inside the box. Otherwise, it has not genuinely seen.

We argue that many mainstream image classification tasks, such as cat versus dog classification, are similar to the water-versus-ice case. Different classes contain rich local statistical differences, and a model can obtain high classification accuracy by fitting these statistical features~\citep{brendel2019approximating,jo2017measuring}. Such a model behaves like a highly sensitive thermometer. It does not need to understand the concept of a cat itself, since surface syntactic features are enough for classification. The key question is therefore how to test whether a model can open the box and see, that is, whether it can understand the deeper semantics of an image. Our approach is to construct a visual coin, where two classes remain completely symmetric across all local statistical features but differ essentially in global semantics.

\begin{figure}[!ht]
\centering
\begin{subfigure}[b]{0.45\textwidth}
\centering
\includegraphics[width=0.65\textwidth]{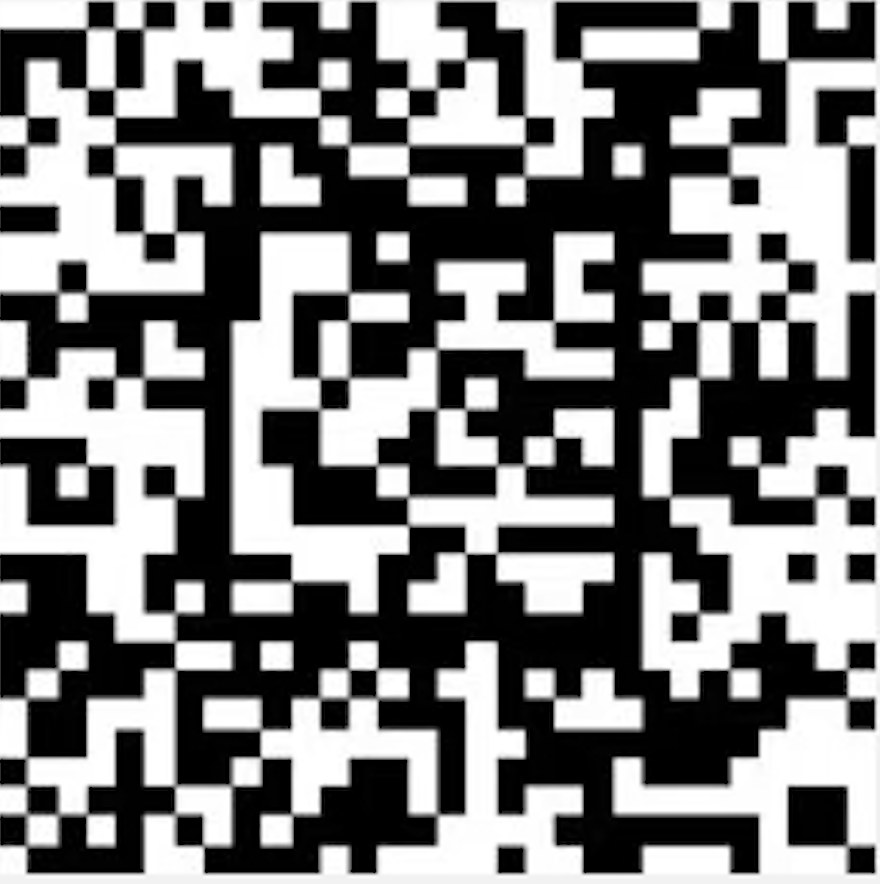}
\caption{Positive example: a closed square outline.}
\label{fig:square_full}
\end{subfigure}
\hfill
\begin{subfigure}[b]{0.45\textwidth}
\centering
\includegraphics[width=0.65\textwidth]{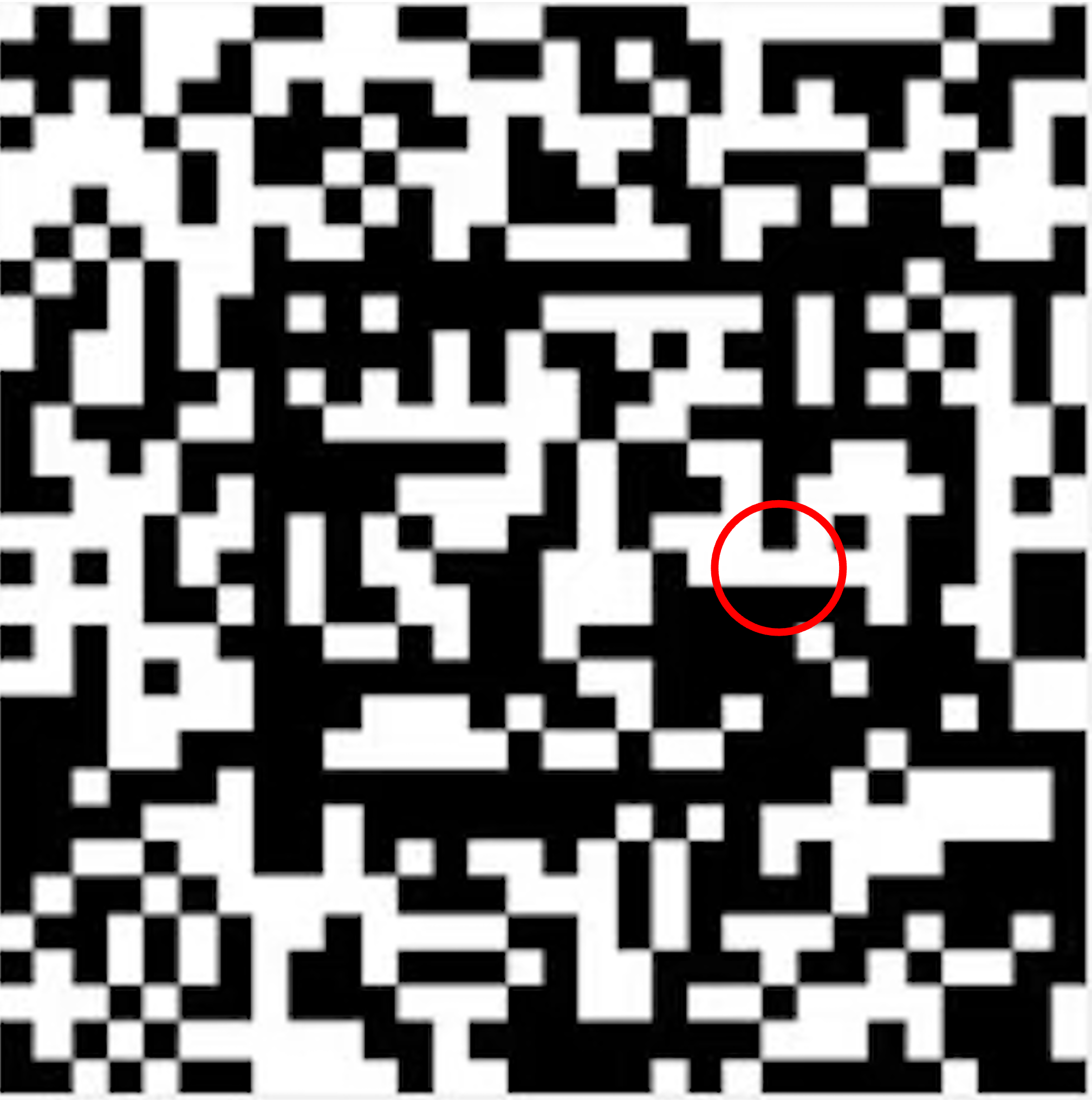}
\caption{Negative example: a broken square outline.}
\label{fig:square_miss}
\end{subfigure}
\caption{Examples from the visual self-referential dataset. Both classes contain a square outline embedded in a high-variance noise background. Positive examples have closed outlines, while negative examples have broken outlines, where only one pixel is flipped.}
\label{fig:dataset}
\end{figure}

\subsection{From Theory to Construction}
\label{sec:theory_to_construction}

To construct such a visual coin, we draw inspiration from self-referential instances in computational complexity theory. Recent studies on SAT, clique, and dominating set~\citep{xu2025sat,li2026constructing,zhou2026self} show that a small symmetry mapping can change a global property while keeping local structures highly similar. These results suggest that local shortcut reasoning cannot replace inspection of the global structure. At the philosophical level, this view echoes the core idea behind G\"odel's incompleteness theorem: truth may not be provable, and syntactic rules cannot substitute for semantic understanding.

We bring this theoretical view into computer vision. As shown in Figure~\ref{fig:dataset}, we embed a square outline into random black-and-white noise images and construct a binary classification task: distinguishing a closed square, whose boundary is complete, from a broken square, whose boundary contains a single flipped pixel. Because the background noise follows a maximum-variance random distribution, the effect of this one-pixel flip is weakened in global statistics. The two classes are hard to distinguish using standard local statistical cues. Therefore, to generalize on this task, a model needs to judge the global completeness of the square outline.

To characterize when objects can be distinguished by local features and when they require global judgment, we introduce \textbf{syntactic distance} as a formal tool. Syntactic distance quantifies feature distinguishability from the perspective of operational symmetry. A positive syntactic distance means that exploitable local feature differences exist, whereas zero syntactic distance means that rule-based local reasoning cannot provide a generalizable basis for distinction. In that case, distinguishing the two objects requires direct inspection of their global semantics. Under this framework, self-referential instances are instances with zero syntactic distance but different semantics, and the statement that local reasoning cannot replace global inspection corresponds to the consequence of zero syntactic distance.

\subsection{Main Findings}
\label{sec:findings}

Our experiments reveal a clear phase transition phenomenon. Analogous to the abrupt freezing of water at 0 degrees Celsius, the model's ability also changes sharply at a critical point. When the image size is small, the model can still rely on strong memorization capacity to fit the task~\citep{zhang2017understanding} and achieve high classification accuracy. As the image size increases, however, accuracy collapses at a critical point to around 50\%, which is the level of random guessing, and does not recover within the observed range.

This collapse appears in both CNNs, such as ResNets~\citep{he2016deep}, and Transformers with global attention, such as ViTs~\citep{dosovitskiy2021image}. This indicates that changing the architecture mainly shifts the critical point. In fact, ViTs with global attention collapse even earlier, further suggesting that architectural differences do not prevent the model from falling to random guessing within the present experimental range. These results indicate a clear capability bottleneck for current deep learning architectures when faced with global concept tasks in which local features fail to provide stable cues. In fact, this bottleneck resonates with the recently proposed ``Abstraction Barrier'' hypothesis~\citep{genewein2026agi}, which holds that AI systems trained on existing human abstractions and concepts lack the ability to discover novel concepts from raw data, and our work provides theoretical and empirical evidence for this hypothesis from the perspective of computer vision, through the syntactic distance framework and the phase-transition experiments on visual self-referential instances.

The starting point of intelligence is the creation of language. Creating language is not simply naming objects at the syntactic level. More importantly, it assigns semantics and content to them. Only through language can intelligence perform reasoning and solve complex problems. An intelligence that cannot create language is essentially imitation. Imitation can solve specific problems, but it cannot lead to general intelligence. Current large language models still reason within the existing human language system. They do not themselves create language and therefore remain limited on the path toward general intelligence.

\section{Syntactic Distance and Self-Referential Instances}
\label{sec:method}

To more clearly decouple local feature fitting from global concept understanding, we first introduce syntactic distance as a formal tool for characterizing when local reasoning fails. We then use this theory to construct a visual self-referential dataset, which allows local feature fitting and global concept understanding to be tested within the same classification setting.

\subsection{Theoretical Motivation: What Determines Whether Two Classes Can Be Seen Apart?}
\label{sec:motivation}

We return to the central question in the introduction. Why can some classification tasks be solved easily by local features, while others cannot? The key lies in whether the operations needed to turn one class of objects into another are symmetric.

Consider the coin example again. Turning heads into tails requires flipping the coin, and turning tails into heads also requires flipping the coin. The operation is the same in both directions. This symmetry makes it difficult to infer the internal state from external cues, and normally one must open the box and look directly. By contrast, turning ice into water requires heating, while turning water into ice requires cooling. The operations in the two directions are different. This asymmetry exposes a feature that can be measured externally, namely temperature, so one can make the judgment without opening the box.

The same logic applies to image classification. To edit an image of a dog into an image of a cat, the operation sequence may need to add striped patterns. In the reverse direction, it may need to remove striped patterns. Adding and removing are asymmetric operations. This asymmetry corresponds to discriminative texture differences between cats and dogs~\citep{geirhos2018imagenet,hermann2020origins}, which classifiers may exploit. This intuition can be formalized.

\subsection{Formal Definition of Syntactic Distance}
\label{sec:definition}

To rigorously describe whether two objects can be distinguished by local features under a given operation set, we define the following concepts.

\begin{definition}[Operation sequence pair]
\label{def:osp}
Given two objects $o_1, o_2$ and an operation set $\mathcal{A}$, an operation sequence pair $(f, g)$ consists of two operation sequences drawn from $\mathcal{A}$, such that
\begin{equation}
    o_1 = f(o_2), \qquad o_2 = g(o_1).
\end{equation}
Each operation sequence can be described in natural language or by a program, and therefore can be represented as a string.
\end{definition}

\begin{definition}[Syntactic distance]
\label{def:sd}
Given two objects $o_1, o_2$ and an operation set $\mathcal{A}$, let $\mathcal{P}$ be the set of all operation sequence pairs $(f, g)$ that transform $o_1$ and $o_2$ into each other. The syntactic distance is defined as
\begin{equation}
d_{\mathrm{syn}}(o_1, o_2 \mid \mathcal{A})
  \;=\;
  \min_{(f,\, g)\,\in\,\mathcal{P}} \; \mathrm{dist}\!\left(\, \mathrm{str}(f),\; \mathrm{str}(g) \,\right),
\end{equation}
where $\mathrm{str}(\cdot)$ denotes the string representation of an operation sequence, and $\mathrm{dist}(\cdot,\cdot)$ denotes a string distance, such as the Hamming distance. If no such operation sequence pair exists, that is, if $\mathcal{P} = \emptyset$, we set $d_{\mathrm{syn}} = \infty$.
\end{definition}

Intuitively, syntactic distance measures the minimum discrepancy between paired operation sequences that transform $o_1$ into $o_2$ and $o_2$ back into $o_1$.
If the paired operation sequences are identical, as in flipping a coin, the syntactic distance is zero, which means that the two objects are hard to distinguish at the operational level alone. If the paired operation sequences differ, as in heating versus cooling, the syntactic distance is positive, which means that operational asymmetry exposes discriminative features that can be exploited.

This definition has two central implications. When $d_{\mathrm{syn}} > 0$, the transformation operations in the two directions differ. This asymmetry exposes a structural difference that an algorithm can capture and use for discrimination, so local feature fitting can often provide an effective basis for classification. When $d_{\mathrm{syn}} = 0$, the operations in the two directions coincide. Local reasoning based on operational rules has difficulty distinguishing the two objects. To distinguish them, one needs to inspect the global meaning directly. In this case, global concept understanding becomes the key route to classification.

This leads to the central concept of this paper: self-referential instances are instances that have zero syntactic distance but different semantics. The condition $d_{\mathrm{syn}} = 0$ means that ordinary local shortcuts cannot replace a direct inspection of global semantics.

\begin{remark}
In practical computation, each operation in $\mathcal{A}$ can be assigned a fixed-length code, such as a binary code, and an operation sequence can be represented as a code string. Then the string distance does not depend on the particular linguistic description, but only on the inherent structure of operations.
\end{remark}

\subsection{Strings and Hamming Distance}
\label{sec:hamming}

Syntactic distance is a general framework. The following theorem shows that, under specific conditions, it reduces to a familiar classical distance measure, the Hamming distance. This provides a directly verifiable anchor for the definition of syntactic distance.

\begin{theorem}
\label{thm:hamming}
Let $o_1, o_2$ be two strings of equal length. Suppose the operation set $\mathcal{A}$ contains only operations that replace the current character at a specified position with a specified character, and suppose the string distance $\mathrm{dist}$ is the Hamming distance. Then
\begin{equation}
d_{\mathrm{syn}}(o_1, o_2 \mid \mathcal{A}) = d_H(o_1, o_2),
\end{equation}
where $d_H$ denotes the Hamming distance, namely the number of positions at which the two strings have different characters.
\end{theorem}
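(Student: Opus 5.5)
The proof fixes a canonical string encoding of operation sequences and then shows matching lower and upper bounds. Let $n$ be the common length and $\Sigma$ the alphabet. Write $\rho_{i,c}\in\mathcal{A}$ for the operation that replaces position $i$ with character $c$.

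\textbf{Encoding.} Following the Remark on fixed-length codes, I encode an operation sequence $f$ over $\mathcal{A}$ by its \emph{net-effect string} $\mathrm{str}(f)\in(\Sigma\cup\{\ast\})^n$. Its $i$-th symbol is the character written by the last operation in $f$ that touches position $i$, or the placeholder $\ast$ if $f$ never touches $i$. All such strings have length $n$, so the Hamming distance between $\mathrm{str}(f)$ and $\mathrm{str}(g)$ is well defined. Two sequences with the same net effect act identically on every input, so nothing relevant to Definition~\ref{def:sd} is lost. I expect this encoding step to be the real obstacle: the theorem is only meaningful once $\mathrm{str}$ is pinned down. Under a raw concatenation of codes, sequences of different lengths would not even admit a Hamming distance. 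So the proof must state this convention explicitly and note that it is the one suggested by the Remark.

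\textbf{Lower bound.} Let $(f,g)\in\mathcal{P}$, so $o_1=f(o_2)$ and $o_2=g(o_1)$. Let $D=\{i : o_1[i]\neq o_2[i]\}$, so $|D|=d_H(o_1,o_2)$. Fix $i\in D$. Since $f$ changes position $i$ from $o_2[i]$ to $o_1[i]$, it must touch $i$, and its last write there must be $o_1[i]$; hence $\mathrm{str}(f)_i=o_1[i]$. Symmetrically, $\mathrm{str}(g)_i=o_2[i]$. These two symbols differ, so every $i\in D$ contributes $1$ to $\mathrm{dist}(\mathrm{str}(f),\mathrm{str}(g))$. Positions outside $D$ contribute a nonnegative amount. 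For example, $f$ might redundantly rewrite an agreeing position that $g$ leaves as $\ast$. Therefore every pair in $\mathcal{P}$ has distance at least $|D|$, and $d_{\mathrm{syn}}\ge d_H(o_1,o_2)$.

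\textbf{Upper bound.} Take $f=\prod_{i\in D}\rho_{i,o_1[i]}$ and $g=\prod_{i\in D}\rho_{i,o_2[i]}$, in any order. These clearly satisfy $o_1=f(o_2)$ and $o_2=g(o_1)$, so $\mathcal{P}\neq\emptyset$. Their net-effect strings agree, both equal to $\ast$, at every $i\notin D$, and differ at every $i\in D$. Hence $\mathrm{dist}(\mathrm{str}(f),\mathrm{str}(g))=|D|$, which gives $d_{\mathrm{syn}}\le d_H(o_1,o_2)$ and proves equality.

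The degenerate case $o_1=o_2$ is covered: $D=\emptyset$, the empty sequences attain distance $0$, and $d_H(o_1,o_2)=0$.
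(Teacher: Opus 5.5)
Your proof is correct and is essentially the paper's argument. The same position-by-position construction gives the upper bound, and the same observation that every differing position $i$ must end as $o_1[i]$ under $f$ and as $o_2[i]$ under $g$ gives the lower bound; your net-effect encoding makes explicit the position-aligned comparison that the paper's optimality step (``at each such position the operations in $f'$ and $g'$ must differ'') relies on only tacitly.

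That convention is load-bearing, not just ``the one suggested by the Remark.'' Suppose a sequence were encoded literally as the string of its operations, one symbol per operation, and take $o_1=a_1a_2$, $o_2=b_1b_2$ with $a_i\neq b_i$. Then the pair $f=(\rho_{1,b_1},\rho_{2,a_2},\rho_{1,a_1})$, $g=(\rho_{1,b_1},\rho_{2,a_2},\rho_{2,b_2})$ (applied left to right) lies in $\mathcal{P}$ but has Hamming distance $1<2=d_H(o_1,o_2)$. So you should present the encoding as a stipulation the theorem depends on, not as an innocuous choice.
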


\begin{proof}
Let $o_1 = a_1 a_2 \cdots a_n$ and $o_2 = b_1 b_2 \cdots b_n$.

We first construct an operation sequence pair. For any operation sequence pair $(f, g)$, $f$ turns $o_2$ into $o_1$, and $g$ turns $o_1$ into $o_2$. Since each operation can replace only one character at one position, completing the transformation requires $f$ and $g$ to perform replacements at all positions where $o_1$ and $o_2$ differ. Therefore, we construct a shortest operation sequence pair as follows. At every different position $i$, where $a_i \neq b_i$, $f$ replaces the $i$-th character with $a_i$, while $g$ replaces the same position with $b_i$.

We then compute the string distance. After writing the two operation sequences as strings, the operations at position $i$ are replace with $a_i$ and replace with $b_i$. If $a_i \neq b_i$, the two operation codes differ. If $a_i = b_i$, no operation is needed at that position, and the two sequences agree. Therefore, $\mathrm{dist}(\mathrm{str}(f), \mathrm{str}(g))$ equals exactly the number of positions at which $a_i \neq b_i$, namely $d_H(o_1, o_2)$.

Finally, optimality follows because any operation sequence pair $(f', g')$ must cover all different positions, and at each such position the operations in $f'$ and $g'$ must differ, since they must set the position to $a_i$ and $b_i$ respectively, with $a_i \neq b_i$. Thus no operation sequence pair can have a smaller string distance.
\end{proof}

For example, let $o_1 = \texttt{abc}$ and $o_2 = \texttt{xyz}$. The sequence $f$ replaces the three positions with $\texttt{a}$, $\texttt{b}$, and $\texttt{c}$, while $g$ replaces the three positions with $\texttt{x}$, $\texttt{y}$, and $\texttt{z}$. The operations differ at all three positions, so $d_{\mathrm{syn}} = d_H = 3$.

This theorem shows that Hamming distance is a special case of syntactic distance when the objects are strings and the operations are restricted to single-character replacement. The generality of syntactic distance lies in the fact that it can handle arbitrary object types and arbitrary operation sets, rather than being limited to string editing.

\subsection{Classification Tasks from the View of Syntactic Distance}
\label{sec:classification}

With syntactic distance as a formal tool, we can understand the essential differences among classification tasks in a unified way and explain why existing benchmarks may fail to expose deficiencies in concept understanding.

\paragraph{Ice and water, $d_{\mathrm{syn}} > 0$.}
Let $\mathcal{A} = \{\text{heating}, \text{cooling}\}$. Turning ice into water requires heating, while turning water into ice requires cooling. The two operations differ, so $d_{\mathrm{syn}} > 0$. This operational asymmetry exposes temperature as an externally measurable discriminative feature.

\paragraph{Images of cats and dogs, $d_{\mathrm{syn}} > 0$.}
Let the operation set consist of image editing operations, such as texture modification and shape transformation. Turning an image of a dog into an image of a cat may require adding patterns, while turning a cat into a dog may require removing patterns. The operations differ, so $d_{\mathrm{syn}} > 0$. The operational asymmetry corresponds to discriminative texture features, which are precisely the features that deep learning classifiers may exploit~\citep{geirhos2018imagenet}. In such tasks, local feature fitting can often provide an effective basis for classification, and a model need not understand the global concept of cat to obtain the correct answer.

\paragraph{Heads and tails of a coin, $d_{\mathrm{syn}} = 0$.}
Let $\mathcal{A} = \{\text{flip}\}$. Turning heads into tails requires flipping, and turning tails into heads also requires flipping. The operations are identical in the two directions, so $d_{\mathrm{syn}} = 0$. Although heads and tails differ clearly in meaning, flipping is a self-inverse operation, and the symmetry of the operation makes external measurements unable to distinguish the two states. One usually has to open the box and observe directly. This is a typical self-referential instance: local reasoning provides no reliable basis, and direct inspection of the global state is required.

This analysis reveals a clear hierarchy. Many existing image classification benchmarks, such as ImageNet~\citep{deng2009imagenet}, fall into the regime $d_{\mathrm{syn}} > 0$. Categories contain rich asymmetric feature differences, and local feature fitting can already provide the main basis for classification. A model can achieve high accuracy by fitting these differences. This explains why high accuracy on such benchmarks does not fully prove that the model has mastered global concepts~\citep{baker2018deep,brendel2019approximating}. By contrast, our visual self-referential task falls into the regime $d_{\mathrm{syn}} = 0$. Local statistical features do not provide stable and generalizable discriminative cues. Global concept understanding becomes the key route to classification, and the model faces a challenge closer to the conceptual level.

\subsection{Construction of Visual Self-Referential Instances}
\label{sec:dataset}

Now, we construct a visual classification dataset that satisfies $d_{\mathrm{syn}} = 0$. We call its samples visual self-referential instances. They instantiate the concept of self-referential instances in the continuous visual domain: the two image classes are semantically different, namely complete square versus broken square, but they are difficult to distinguish by local operational rules in the sense of syntactic distance, with $d_{\mathrm{syn}} = 0$. The generation process is as follows.

\paragraph{High-variance background generation.}
We define the image canvas as an $N \times N$ black-and-white pixel matrix, where each pixel takes value 0 for black or 1 for white. All background pixels are independently sampled as black or white with probability 50\%, that is, from a Bernoulli distribution with $p = 0.5$. This probability setting is crucial. A 50/50 random distribution produces the theoretically largest statistical fluctuation, with variance $\sigma^2 = 0.25$. Small changes in the number of pixels are strongly weakened by natural noise, which reduces the possibility that an algorithm can classify using statistical shortcuts such as the ratio of black to white pixels.

\paragraph{Positive example generation, closed square.}
A square outline is generated at the center of the image. The square consists only of its boundary and is not filled. Its side length is fixed as $s = N/2$. All pixels on the continuous one-pixel-wide boundary are assigned the same value, either all black or all white.

\paragraph{Negative example generation, broken square.}
The negative examples are generated independently from the positive examples, meaning that the background noise is resampled, so that the two classes have the same background noise distribution. We generate a square outline of the same size at the center, again all black or all white. During generation, one pixel is uniformly sampled from the boundary and flipped, from black to white or from white to black, creating a tiny gap.

\paragraph{Key design considerations.}
The construction removes localization as a confounder because the square position and size are fixed at the image center. If a model fails, the failure is less likely to be caused by an inability to find the square and more likely to stem from difficulty in deciding whether the boundary is complete. The construction also makes local features isomorphic. Since the background noise follows a 50/50 random distribution, the forced flip of one pixel is statistically weakened by large background fluctuations. For ordinary local observation windows, or receptive fields, the local statistics near the boundary cannot be reliably distinguished between positive and negative examples. Finally, the syntactic distance is zero. As discussed in Section~\ref{sec:classification}, the transformation between positive and negative examples is a symmetric self-inverse operation. Thus local shortcut reasoning cannot reliably replace a direct inspection of the global completeness of the square.

The theoretical basis for this construction is solution independence. Recent work~\citep{zhou2026solution,zhou2026self} identifies solution independence and the resulting irreducibility as key conditions for constructing self-referential instances: when the value of a local component cannot be determined from the remaining components, local inspection cannot form a stable basis for the global decision. Our visual construction satisfies this condition. The maximum-variance 50/50 random background ensures that pixel values are mutually independent, so the value of a single pixel is hard to determine from the remaining pixels in the image. This pixel-level independence makes the flipped gap pixel statistically hard to leave a stable trace that can be detected locally, thereby enabling the construction of visual self-referential instances.

\paragraph{Correspondence to the box analogy.}
Returning to the sealed-box analogy in Section~\ref{sec:analogy}, the components of our visual self-referential instances correspond to the analogy as follows. The box corresponds to the surface features locally observable by the model, such as the random noise texture of the background and the lines and right angles of the square outline. These are syntactic signals that the model can read without opening the box. The state of the object inside the box corresponds to the semantic judgment of whether the square is complete, which is hard to infer stably from local features. Opening the box corresponds to the model moving from local feature fitting to global concept understanding, namely no longer relying on statistical shortcuts but recognizing that the object is a complete square. Since $d_{\mathrm{syn}} = 0$, the outside of the box, namely the local syntactic features, is highly similar for positive and negative examples, just as the outside measurements of a box containing a coin provide no stable difference between heads and tails. Correct classification can therefore be viewed as an operational test of whether the model can open the box and see, which is exactly the question posed by the title of this paper: can machines really see?

\subsection{Task Definition: Decoupling Feature Fitting from Concept Understanding}
\label{sec:tasks}

Based on the above dataset, we define two complementary binary classification tasks to separate the roles of local feature fitting and global concept understanding.

\paragraph{Task A: feature-driven classification, a control task with $d_{\mathrm{syn}} > 0$.}
The model must distinguish pure random noise images from images containing a closed square. Since the continuous boundary of the closed square introduces strong local features, such as lines and right angles, it differs sharply from pure noise. The syntactic distance between the two image classes is much larger than zero. This task evaluates low-level feature extraction. Success on Task A shows that the model can stably extract low-level visual features such as square boundaries, and it rules out the explanation that failure comes from a basic inability to detect geometric shapes.

\paragraph{Task B: concept-driven classification, the core self-referential task with $d_{\mathrm{syn}} = 0$.}
The model must distinguish closed squares, whose boundary is complete, from broken squares, whose boundary has a one-pixel gap. Because the syntactic distance between the two image classes is $d_{\mathrm{syn}} = 0$, they share the same background noise distribution and local edge features. Feature fitting alone is far from sufficient. Under the constraint $d_{\mathrm{syn}} = 0$, the model needs to recognize whether the square is complete in order to solve Task B. By testing models on gradually increasing image scales $N$, we characterize the capability boundary of feature-fitting-based deep learning on global concept tasks.

\section{Experiments}
\label{sec:experiments}
\subsection{Experimental Setup}
\label{sec:setup}

\paragraph{Dataset.}
We generate visual self-referential instances following Section~\ref{sec:dataset}. Let $N_{\mathrm{train}}$, $N_{\mathrm{val}}$, and $N_{\mathrm{test}}$ denote the number of images per class in the training, validation, and test sets, respectively. We set $N_{\mathrm{train}} \in \{5000, 10000, 20000, 40000\}$, and fix the validation and test sizes to $N_{\mathrm{val}}=2000$ and $N_{\mathrm{test}}=5000$ per class. For each image scale $N$, the training set is independently resampled for each value of $N_{\mathrm{train}}$, whereas the validation and test sets are kept fixed across all choices of $N_{\mathrm{train}}$. This design isolates the effect of the training-set size from sampling noise in the evaluation sets. We sweep the image scale $N$ from $20$ to $220$. To accurately locate the collapse transition points, we use dense sampling with step size $\Delta N = 1$ near the critical transition region.

\paragraph{Models.}
We consider two representative families of main visual architectures. The first family consists of convolutional neural networks, for which we use ResNet18, ResNet34, and ResNet50~\citep{he2016deep} as models with three representative capacity levels. These models are used in the experiments in Sections~\ref{sec:control} and~\ref{sec:phase_transition}. The second family consists of Vision Transformers with global self-attention, for which we choose ViT-Tiny, ViT-Small, and ViT-Base, all with patch size $16$ and input resolution $224$~\citep{dosovitskiy2021image}. These models are used for the cross-architecture comparison in Section~\ref{sec:architecture}. All main models take single-channel grayscale images as input and produce binary classification outputs. Section~\ref{sec:architecture} also includes an additional pixel-token Transformer ablation, described there.

\paragraph{Training setup.}
Unless otherwise specified, the main ResNet and patch-based ViT models are initialized with ImageNet-pretrained weights~\citep{deng2009imagenet} and fine-tuned on our dataset. Therefore, these models have strong visual priors induced by pretraining before task-specific optimization begins. For these main experiments, all images are resized to $224 \times 224$ and normalized using ImageNet statistics. During training, we apply label-preserving geometric augmentations, including random rotations by multiples of $90^\circ$, random horizontal flips, and random vertical flips, to discourage the models from relying on a fixed orientation or gap location. Validation and test images are processed only with resizing and normalization. Each configuration $(\text{model}, N_{\mathrm{train}}, N)$ is trained for 50 epochs with five independent random seeds, ranging from 42 to 46. We optimize the models with AdamW using cross-entropy loss, a batch size of 128, and a weight decay of $10^{-2}$. The learning rate is decayed to $10^{-6}$ with a cosine schedule. The initial learning rate is $3 \times 10^{-4}$ for CNNs and $1 \times 10^{-4}$ for ViTs. For each seed, we select the checkpoint with the highest validation accuracy and evaluate it on the test set. We report the mean test accuracy, computed as the arithmetic mean over the five seeds. Max Test Acc denotes the highest test accuracy among the five seeds and reflects the best performance achieved across different initializations.

\paragraph{From artificial images to real scenarios.}
Although the images in this study are artificially generated, their visual features correspond to samples in real scenarios. A geometrically complete pattern can correspond to healthy organ tissue without lesions in medical imaging, or to a qualified industrial component without defects in industrial inspection. The capability boundary revealed in these experiments therefore has implications beyond the artificial dataset itself.

\begin{table}[!ht]
\centering
\caption{Average test accuracy on Task A, where $d_{\mathrm{syn}} > 0$, with $N_{\mathrm{train}} = 10{,}000$. Accuracy stays close to 1.0 at all scales and shows no clear decay.}
\label{tab:taskA}
\resizebox{\textwidth}{!}{%
\begin{tabular}{lccccccccccc}
\toprule
$N$ & 20 & 40 & 60 & 80 & 100 & 120 & 140 & 160 & 180 & 200 & 220 \\
\midrule
ResNet18 & 0.9996 & 0.9998 & 0.9998 & 0.9999 & 1.0000 & 1.0000 & 1.0000 & 0.9999 & 1.0000 & 1.0000 & 1.0000 \\
ResNet34 & 0.9999 & 1.0000 & 0.9999 & 0.9998 & 0.9999 & 1.0000 & 0.9999 & 0.9999 & 1.0000 & 1.0000 & 1.0000 \\
ResNet50 & 0.9998 & 0.9999 & 0.9996 & 0.9999 & 0.9999 & 1.0000 & 1.0000 & 1.0000 & 1.0000 & 1.0000 & 1.0000 \\
\bottomrule
\end{tabular}
}
\end{table}

\subsection{Control Experiment: Verifying Low-Level Shape Detection}
\label{sec:control}

\paragraph{Purpose.}
This experiment addresses a possible concern: the model may fail simply because the background noise is too strong for it to perceive even basic shapes.

\paragraph{Result.}
On Task A, which distinguishes pure random noise from closed squares and satisfies $d_{\mathrm{syn}} > 0$, all three ResNet architectures maintain classification performance close to 1.0 across the full scale range $N \in \{20, 40, \ldots, 220\}$. Table~\ref{tab:taskA} reports the average test accuracy of the three models at each scale when $N_{\mathrm{train}} = 10{,}000$. Accuracy is no lower than $99.96\%$ at all scales. Figure~\ref{fig:taskA} further shows the result curves of the three architectures. Accuracy remains close to 1.0 overall and does not show clear decay as the scale increases.

\begin{figure}[ht]
\centering
\begin{subfigure}[b]{0.82\textwidth}
\centering
\includegraphics[width=\linewidth]{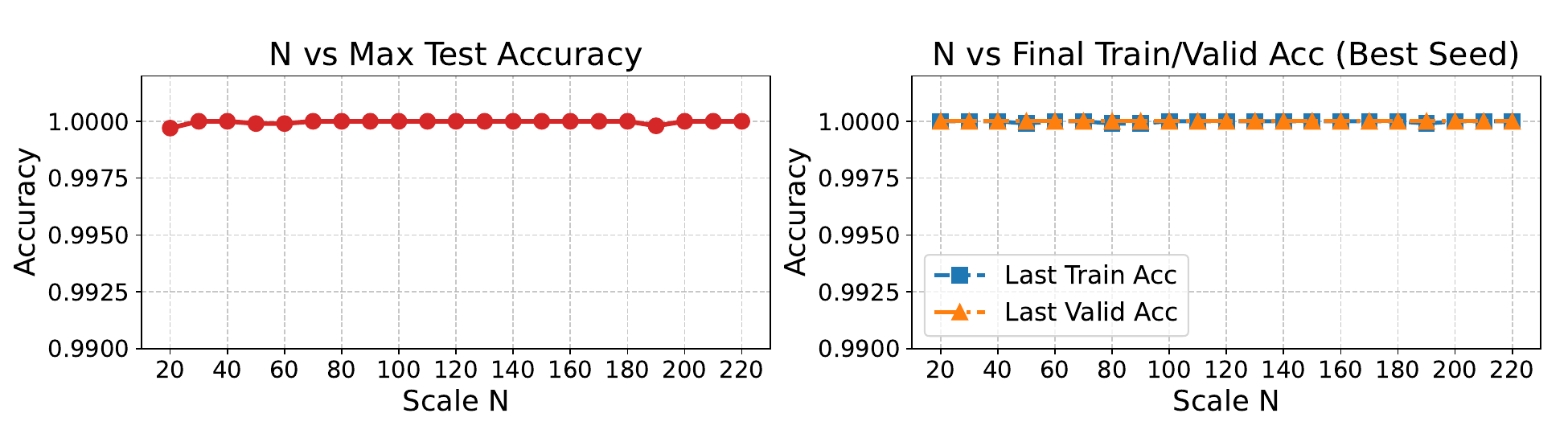}
\caption{Model: ResNet18, Task: A, N\_Train: 10000.}
\end{subfigure}
\begin{subfigure}[b]{0.82\textwidth}
\centering
\includegraphics[width=\linewidth]{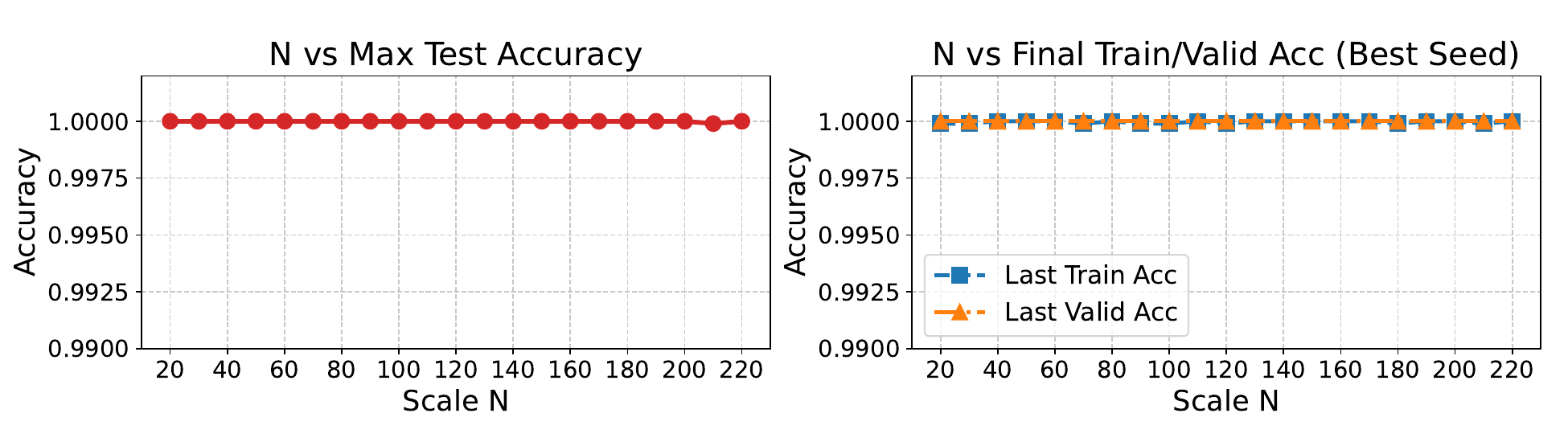}
\caption{Model: ResNet34, Task: A, N\_Train: 10000.}
\end{subfigure}
\begin{subfigure}[b]{0.82\textwidth}
\centering
\includegraphics[width=\linewidth]{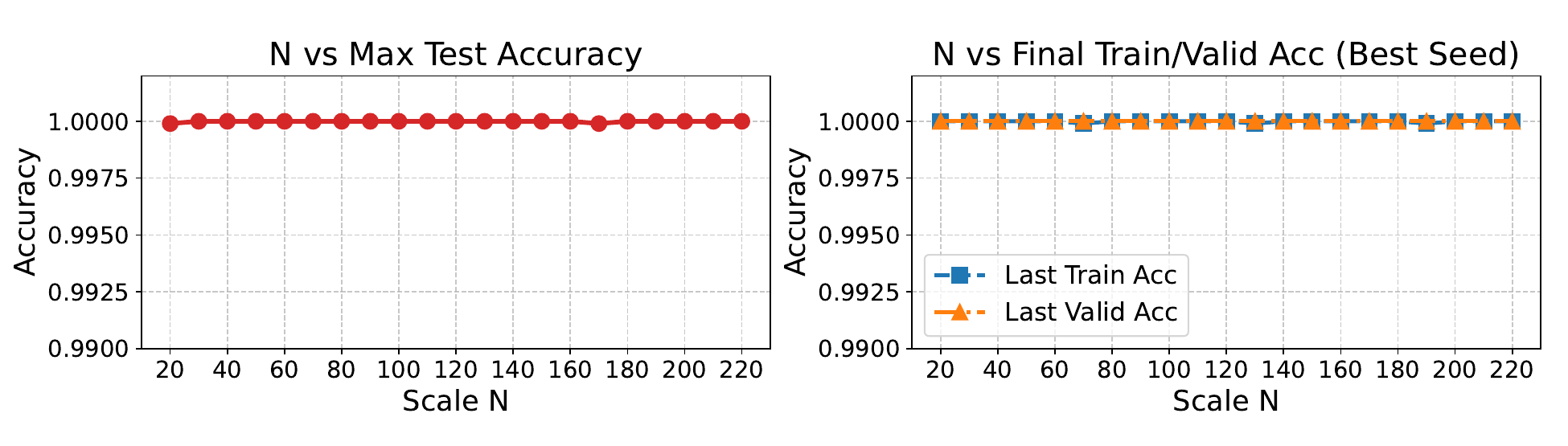}
\caption{Model: ResNet50, Task: A, N\_Train: 10000.}
\end{subfigure}
\caption{Best test accuracy and corresponding training and validation accuracy of three ResNet architectures on Task A, where $d_{\mathrm{syn}} > 0$ and $N_{\mathrm{train}} = 10{,}000$, as functions of scale $N$. The curves stay close to $100\%$, indicating that the high-variance noise background does not substantially interfere with classification on tasks with $d_{\mathrm{syn}} > 0$.}
\label{fig:taskA}
\end{figure}

\paragraph{Implication.}
These results show that deep learning models have stable low-level shape detection and local feature extraction abilities. When the classes contain clear statistical differences, that is, when $d_{\mathrm{syn}} > 0$, local feature fitting provides an effective classification basis, and the high-variance background does not substantially affect the model. Therefore, the later failure on Task B is hard to attribute to an inability to see the shape.

\subsection{Core Finding: Phase-Transition Collapse in Concept Recognition}
\label{sec:phase_transition}

\paragraph{Purpose.}
In this paper, we use \emph{concept recognition} in an operational sense. 
It refers to a model's ability to classify an image according to the global semantic predicate that defines the class, rather than according to local statistical cues or memorized pixel configurations. 
For Task B, this predicate is whether the square boundary is complete. 
Thus, stable concept recognition would require the model to recognize square completeness across increasing image scales under the constraint $d_{\mathrm{syn}}=0$.


\begin{figure}[!ht]
\centering
\includegraphics[width=\textwidth]{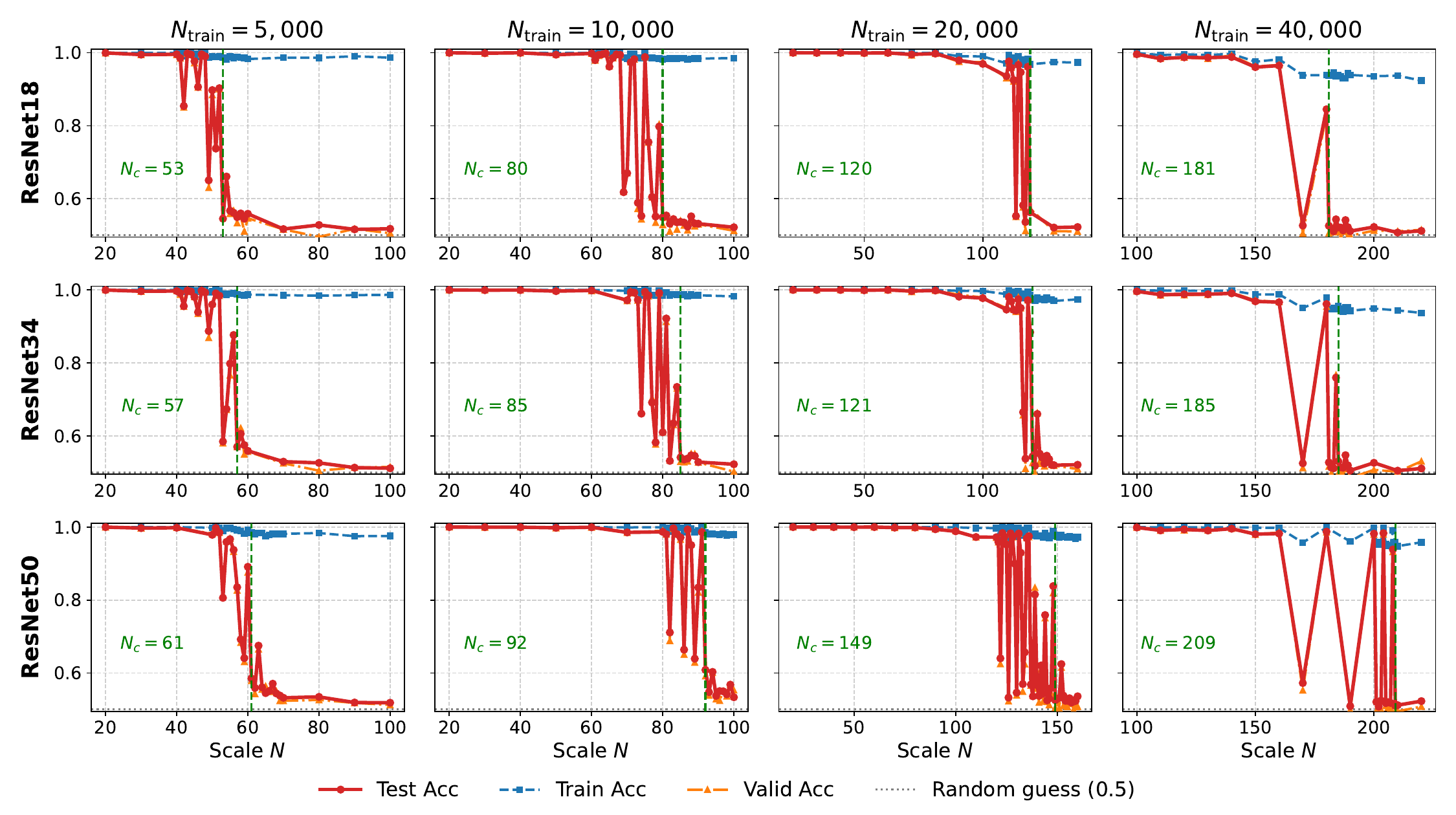}
\caption{Accuracy of three ResNet architectures, with rows corresponding to ResNet18, ResNet34, and ResNet50, on Task B with $d_{\mathrm{syn}} = 0$ as scale $N$ varies. The four columns correspond to $N_{\mathrm{train}} = 5{,}000 / 10{,}000 / 20{,}000 / 40{,}000$. Each panel overlays the test, training, and validation accuracy of the best seed. The gray dashed line marks random guessing at $0.5$, and the green vertical dashed line marks the critical collapse point $N_c$. The three-stage phase transition, from high accuracy to fluctuation and then to random guessing, appears consistently across all 12 configurations. Training accuracy stays close to $1.0$ for a long range, indicating that training-set memorization persists, while test and validation accuracy collapse to around $0.5$ after crossing $N_c$. The critical point $N_c$ shifts markedly to the right as training size and model capacity increase, but all settings enter the random-guessing regime at larger scales within the present experimental range.}
\label{fig:taskB_resnet}
\end{figure}

\paragraph{Result: a clear three-stage phase transition.}
As shown in Figure~\ref{fig:taskB_resnet}, all three ResNet architectures exhibit highly consistent three-stage behavior on Task B. In the high-accuracy region, where $N < N_c$, test accuracy remains above $99\%$. Near the collapse point, there is an unstable transition region in which accuracy fluctuates non-monotonically between $0.5$ and $0.99$. In the random-guessing collapse region, where $N \ge N_c$, performance degrades to random guessing, and accuracy remains in the interval $[0.50, 0.55]$ for a long range.

\begin{table}[!ht]
\centering
\small
\caption{Collapse critical point $N_c$ on Task B under different architectures and training sizes. The critical point shifts to larger values as training size and model capacity increase, but all settings still fall to nearly 50\% accuracy at larger $N$.}
\label{tab:Nc}
\begin{tabular}{lcccc}
\toprule
Model & $N_{\mathrm{train}}=5{,}000$ & $N_{\mathrm{train}}=10{,}000$ & $N_{\mathrm{train}}=20{,}000$ & $N_{\mathrm{train}}=40{,}000$ \\
\midrule
ResNet18 & 53 & 80 & 120 & 181 \\
ResNet34 & 57 & 85 & 121 & 185 \\
ResNet50 & 61 & 92 & 149 & 209 \\
\bottomrule
\end{tabular}
\end{table}

\paragraph{Critical point $N_c$.}
We define the critical scale $N_c$ as the smallest $N$ such that, for all $N' \ge N_c$, the highest test accuracy among the five seeds, namely Max Test Acc, is below $0.7$. This definition uses the best seed as the criterion. As long as any seed still exceeds $0.7$, the model is not counted as collapsed. It also requires all subsequent scales to stay below $0.7$, which places the critical point strictly at the end of the fluctuating transition band and at the start of the random-guessing collapse region. This avoids misidentifying occasional high spikes in the transition region as recovery. Table~\ref{tab:Nc} summarizes the critical points for ResNet18, ResNet34, and ResNet50 under four training sizes $N_{\mathrm{train}} \in \{5{,}000,\ 10{,}000,\ 20{,}000,\ 40{,}000\}$.

\paragraph{Finding 1: more data delays collapse but does not yet prevent it.}
The horizontal changes across $N_{\mathrm{train}}$ in Table~\ref{tab:Nc} show that increasing the training size usually shifts the critical point $N_c$ to the right. For ResNet50, as $N_{\mathrm{train}}$ increases from $5{,}000$ to $40{,}000$, an eightfold increase, $N_c$ moves from $61$ to $209$, about a 3.4-fold increase. Thus, the collapse point grows much more slowly than the training size itself. However, under all four training sizes, accuracy falls to nearly 50\% within the observed scale range. The main difference is where the collapse occurs. This closely matches the prediction from $d_{\mathrm{syn}} = 0$: increasing the training set mainly allows the model to cover a larger solution space and thereby delay the phase transition, but it is not equivalent to learning the global concept of whether the square is complete. Once the solution space exceeds the model's finite memorization capacity, the classification strategy no longer generalizes. In other words, the model appears to maintain performance within a finite scale range by relying on broader training coverage. Yet sample coverage is constrained by finite data, while the concept of square completeness is not constrained by finite sample size.

\paragraph{Finding 2: model capacity also mainly delays collapse.}
Comparing the three rows in Table~\ref{tab:Nc} shows that, at the same $N_{\mathrm{train}}$, the larger ResNet50 obtains only a limited shift in $N_c$ relative to ResNet18. When $N_{\mathrm{train}} = 40{,}000$, the shift is from $181$ to $209$, about $15\%$. There is still no sign that the larger model avoids entering the random-guessing regime at larger scales. Thus, model capacity and training size play similar roles on Task B. Both appear to enlarge the number of patterns that can be memorized within the existing feature language, but neither provides evidence of breaking the capability boundary set by $d_{\mathrm{syn}} = 0$.

\paragraph{Implication.}
These results constitute the key empirical finding of this paper: a phase transition. When the scale $N$ is small, the model can use its high parameter capacity to memorize local compositions in a finite solution space. As $N$ increases, however, $d_{\mathrm{syn}} = 0$ explains why local fitting strategies fail to generalize. The path of approximating a global concept with finite feature rules quickly becomes unstable. The additional observation that both data size and model capacity mainly delay but do not eliminate collapse supports the robustness of this conclusion along multiple dimensions. The phase transition is not an accident of a particular architecture or a particular data size. It is more likely a structural consequence of tasks with $d_{\mathrm{syn}} = 0$.

\subsection{Architectural Limitations}
\label{sec:architecture}

\begin{table}[ht]
\centering
\small
\caption{Collapse critical point $N_c$ of Vision Transformers on Task B, where $d_{\mathrm{syn}} = 0$. Increasing the training size still delays collapse, but at the same training size, the $N_c$ of ViTs is much smaller than that of CNNs, as compared with Table~\ref{tab:Nc}. The largest model, ViT-Base, does not obtain a later collapse point.}
\label{tab:Nc_vit}
\begin{tabular}{lcccc}
\toprule
Model & $N_{\mathrm{train}}=5{,}000$ & $N_{\mathrm{train}}=10{,}000$ & $N_{\mathrm{train}}=20{,}000$ & $N_{\mathrm{train}}=40{,}000$ \\
\midrule
ViT-Tiny  & 37 & 50 & 65 & 89 \\
ViT-Small & 40 & 61 & 68 & 89 \\
ViT-Base  & 44 & 50 & 69 & 68 \\
\bottomrule
\end{tabular}
\end{table}

\paragraph{Purpose.}
The phase-transition collapse in Section~\ref{sec:phase_transition} is observed on CNNs. A natural question is whether the collapse comes from the local receptive fields inherent to convolution. Vision Transformers provide a useful comparison because they replace convolutional receptive fields with patch-wise global self-attention and exhibit different inductive biases from CNNs~\citep{dosovitskiy2021image,naseer2021intriguing}. To test whether this architectural change can cross the boundary set by $d_{\mathrm{syn}} = 0$, we apply the same scale range and training setup to three Vision Transformer capacities, ViT-Tiny, ViT-Small, and ViT-Base, on Task B. This examines whether phase-transition collapse is a structural limitation of the current deep learning paradigm or merely a special case of one network architecture.

\paragraph{Result: global attention does not prevent collapse and collapses earlier.}
Table~\ref{tab:Nc_vit} summarizes the collapse critical points $N_c$ for the three ViTs under four training sizes, and Figure~\ref{fig:taskB_vit} shows their full phase-transition curves. The results show three clear facts. First, ViTs display a three-stage transition similar to CNNs: accuracy is close to $100\%$ at small scales, collapses to the random-guessing interval $[0.50, 0.55]$ after the critical point, and does not recover within the observed range. Global attention does not produce a qualitative change in this setting. Second, collapse occurs much earlier than in CNNs. At the same training size, the critical point $N_c$ of ViTs is about half, or even less, of the corresponding CNN value. For example, when $N_{\mathrm{train}} = 40{,}000$, the $N_c$ values of the ResNet family lie between $181$ and $209$ as shown in Table~\ref{tab:Nc}, whereas all three ViTs have already collapsed by $N_c \le 89$. The global-attention architecture, which has weaker spatial priors, shows no advantage on this task and instead enters the random-guessing regime earlier. Third, increased capacity does not yield concept learning. The monotonic trend seen in CNNs, where larger capacity delays collapse, does not hold for ViTs. The collapse points of the three ViTs are close to each other, and the largest model, ViT-Base, does not collapse systematically later. Under $N_{\mathrm{train}} = 40{,}000$, it even collapses earliest, with $N_c = 68$, while ViT-Tiny and ViT-Small both have $N_c = 89$. Increasing the scale of the attention model does not show evidence that the model learns the global concept of whether the square is complete.

Consistent with the CNN results, increasing the training size still delays ViT collapse. For example, the $N_c$ of ViT-Tiny increases from $37$ to $89$ as $N_{\mathrm{train}}$ grows. Yet this effect still mainly delays rather than eliminates collapse. Within the current experimental range, all settings fall into the 50\% random region at larger scales.

\begin{figure}[ht]
\centering
\includegraphics[width=\textwidth]{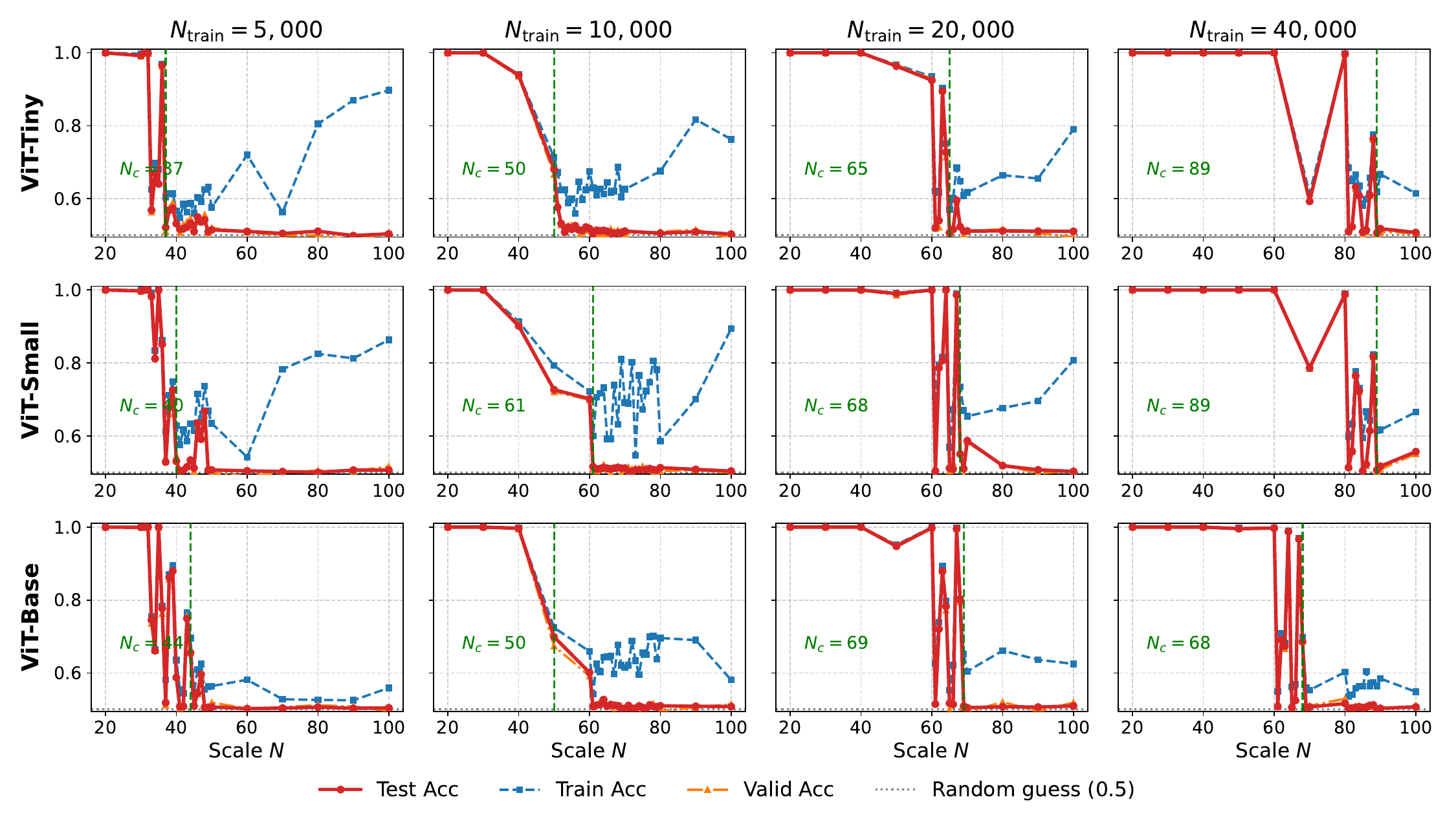}
\caption{Accuracy of three ViT architectures, with rows corresponding to ViT-Tiny, ViT-Small, and ViT-Base, on Task B with $d_{\mathrm{syn}} = 0$ as scale $N$ varies. The four columns correspond to $N_{\mathrm{train}} = 5{,}000 / 10{,}000 / 20{,}000 / 40{,}000$. The legend and $N_c$ annotation follow Figure~\ref{fig:taskB_resnet}. The three-stage phase transition, from high accuracy to fluctuation and then to random guessing, resembles the CNN case, but the critical point $N_c$ appears much earlier. All ViTs collapse at $N_c \le 89$, whereas CNNs under the same training size range between $181$ and $209$. This indicates that global attention does not yet show evidence of breaking the capability boundary set by $d_{\mathrm{syn}} = 0$ in this setting.}
\label{fig:taskB_vit}
\end{figure}

\paragraph{Pixel-token Transformer without patch aggregation.}
To isolate patch-level aggregation as a possible source of the ViT collapse, we further remove the standard patch embedding and train a Transformer from scratch on pixel tokens. Unlike the main ResNet and patch-based ViT experiments, this model takes the generated $N \times N$ image directly, without resizing it to $224 \times 224$. Each original pixel is treated as one token. The model therefore uses neither $16\times16$ patch aggregation nor ImageNet pretraining. The pixel-token Transformer uses embedding dimension $128$, depth $4$, and $4$ attention heads. Its optimizer, batch size, weight decay, seed protocol, and cosine learning-rate schedule follow the ViT setting, with the initial learning rate set to $1 \times 10^{-4}$ and decayed to $10^{-6}$. Although this configuration has fewer parameters than ViT-Tiny, its pixel-level sequence length makes the comparison not compute-matched. Around its collapse point, its block-level compute lies between that of ViT-Tiny and ViT-Small. We evaluate it on Task B with $N_{\mathrm{train}} = 10{,}000$ using a dense scale sweep around the collapse region. As shown in Figure~\ref{fig:taskB_pixel}, it exhibits the same three-stage phase transition, but collapses earlier than the patch-based ViTs: its critical point is $N_c = 34$, compared with $N_c$ between $50$ and $61$ for the patch-based ViTs at the same training size. The transition band is also markedly noisier, with different seeds at the same scale ranging from random guessing to near-perfect accuracy. Thus, replacing patch tokens with pixel tokens and removing pretraining does not help cross the $d_{\mathrm{syn}} = 0$ boundary. Instead, collapse occurs earlier.

\begin{figure}[ht]
\centering
\includegraphics[width=0.72\textwidth]{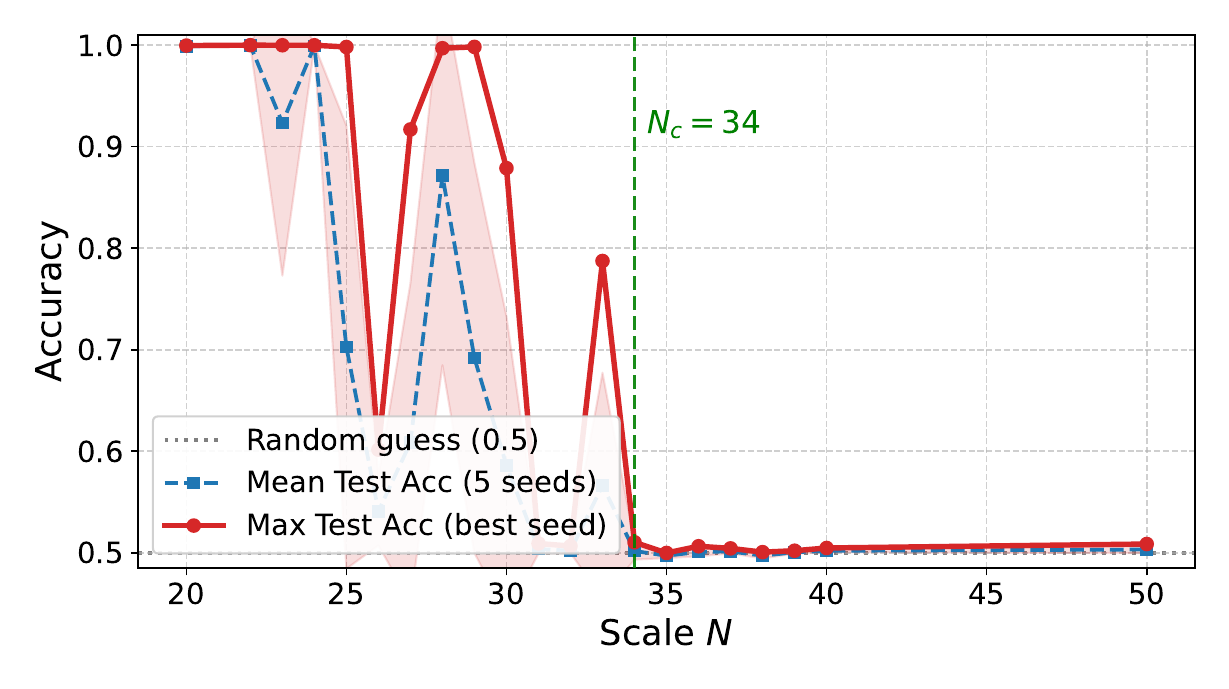}
\caption{Pixel-token Transformer on Task B ($d_{\mathrm{syn}} = 0$, $N_{\mathrm{train}} = 10{,}000$): each original pixel of the generated $N \times N$ image is treated as one token, with no resizing to $224 \times 224$, no $16\times16$ patch aggregation, and no ImageNet pretraining. The model is trained from scratch with embedding dimension $128$, depth $4$, and $4$ attention heads. The red curve is the best-seed (Max) test accuracy and the blue dashed curve is the 5-seed mean. The shaded band shows $\pm 1$ standard deviation, the gray dotted line marks random guessing at $0.5$, and the green dashed line marks the collapse point $N_c = 34$. The three-stage transition from high accuracy to fluctuation and then to random guessing is preserved, but the collapse occurs earlier than for the patch-based ViTs at the same training size ($N_c$ between $50$ and $61$), and the transition band is visibly noisier.}
\label{fig:taskB_pixel}
\end{figure}

\paragraph{Implication.}
Within this ViT section, the patch-based ViTs and the pixel-token Transformer test two attention-based ways of representing the same $d_{\mathrm{syn}} = 0$ task: standard patch tokens with pretrained visual features, and original-pixel tokens trained from scratch. Neither setting enables the model to create a new rule for capturing the global completeness concept. Because the pixel-token model differs in parameter count, sequence length, and pretraining, its earlier collapse should not be read as a capacity-matched comparison against ViT-Tiny, ViT-Small, or ViT-Base. The narrower implication is that the failure is not simply caused by the $16\times16$ patch embedding or by ImageNet-pretrained ViT features: removing both does not make the task more conceptually accessible. Attention changes how information is aggregated, but it does not by itself supply a new descriptive language for the global concept required by the task.

\section{Discussion}
\label{sec:discussion}

The phase-transition collapse observed in our experiments can be understood from the perspective of compression. Intelligence can be viewed as compressing sensory input into representations that preserve the distinctions needed for reasoning and action. Current deep learning models also perform a form of compression: within an existing feature language, such as convolution kernels, attention patterns, and learned feature representations, they compress statistical regularities in the training data into model parameters. This compression can be effective when $d_{\mathrm{syn}} > 0$, because local features already carry discriminative information.

The situation changes when $d_{\mathrm{syn}} = 0$. In our visual self-referential task, local feature descriptions provide no stable basis for the distinction between closed and one-pixel-broken squares. At small scales, models can still maintain high accuracy by fitting a finite sample space, but as the scale grows, this strategy becomes increasingly unstable and eventually collapses to random guessing. This suggests that stable performance on such tasks requires a different kind of compression: forming a new descriptive concept that captures global semantics, rather than fitting more instances within an existing feature language~\citep{genewein2026agi}.

This perspective also connects to two broader directions in current artificial intelligence: world models and embodied AI. World models aim to learn internal representations of the environment that support prediction, planning, and decision-making~\citep{ha2018world,lecun2022path}. Although this objective appears to target physical structure, many existing world models still rely heavily on statistical regularities in observed data and have not fully captured the causal structure and physical invariances of the environment~\citep{ding2024understanding}. They can interpolate efficiently within familiar distributions, but they remain limited when faced with situations that require forming new concepts for unseen physical phenomena.

Embodied AI approaches intelligence through direct interaction between an agent and the physical world. This direction is closely related to the symbol grounding problem~\citep{harnad1990symbol}: concepts acquire stable meaning only when they are grounded in sensorimotor experience, rather than defined only through other symbols. Yet many recent embodied systems still inherit much of their conceptual structure from large language models~\citep{driess2023palme,brohan2023rt2}. In such systems, language models often provide concepts and planning, while the robot performs low-level execution. This can enable strong functional performance, but the conceptual system is largely borrowed rather than autonomously formed through physical interaction.

Thus, the limitation observed in our visual self-referential task may reflect a broader bottleneck shared by several current AI paradigms. Vision models fit feature patterns in pixel space, world models predict statistical structure in state space, and embodied systems often inherit concepts from pretrained language models. These routes differ in implementation, but all raise the same question: can an intelligent system form new descriptive concepts when its existing language is insufficient~\citep{genewein2026agi}?

\section{Conclusion}
\label{sec:conclusion}

Returning to Wittgenstein's view that the limits of language are the limits of the world, this paper treats visual recognition as bounded by the descriptive language through which an object can be represented and classified. For current vision models, this language is largely instantiated by learned feature representations. The central question is therefore whether a model can still recognize a global object concept when its local feature language provides no stable basis for distinction.

We formalized this question with syntactic distance and constructed visual self-referential instances in which closed and one-pixel-broken squares differ in global semantics but have $d_{\mathrm{syn}} = 0$. Across ResNet and Vision Transformer architectures, accuracy exhibited a consistent phase-transition collapse: it remained high at small image scales, fell to the random-guessing regime after a critical scale, and did not recover within the observed range. Larger data and models delayed but did not remove this boundary. These results suggest that current architectures still operate mainly within an existing descriptive language. When recognition requires forming a new concept for global semantics, their capability boundary becomes visible. Therefore, the path toward AGI may lie not merely in enabling machines to master more languages, but also in equipping them to create language.

\bibliographystyle{plainnat}
\bibliography{references}

\end{document}